
\documentclass[nohyperref]{article}

\usepackage{microtype}
\usepackage{graphicx}
\usepackage{subfigure}
\usepackage{booktabs} 

\PassOptionsToPackage{hyphens}{url}
\usepackage{hyperref}



\usepackage[accepted]{icml2022}

\usepackage{amsmath}
\usepackage{amssymb}
\usepackage{mathtools}
\usepackage{amsthm}
\usepackage{thmtools, thm-restate}
\usepackage{stmaryrd}

\usepackage[capitalize,noabbrev]{cleveref}

\theoremstyle{plain}
\newtheorem{theorem}{Theorem}[section]
\newtheorem{proposition}[theorem]{Proposition}

\theoremstyle{definition}
\newtheorem{definition}[theorem]{Definition}
\newtheorem{assumption}[theorem]{Assumption}
\theoremstyle{remark}
\newtheorem{remark}[theorem]{Remark}
\newtheorem{example}[theorem]{Example}

\usepackage[textsize=tiny]{todonotes}
\usepackage{dsfont}
\usepackage[T1]{fontenc}

\usepackage{listings}

\definecolor{codegreen}{rgb}{0,0.6,0}
\definecolor{codegray}{rgb}{0.5,0.5,0.5}
\definecolor{codeorange}{rgb}{230, 96, 0}
\definecolor{backcolour}{rgb}{0.95,0.95,0.92}

\lstdefinestyle{mystyle}{
    backgroundcolor=\color{backcolour},
    commentstyle=\color{codegreen},
    keywordstyle=\color{blue},
    numberstyle=\tiny\color{codegray},
    basicstyle=\ttfamily\footnotesize,
    breakatwhitespace=false,  
    breaklines=true,                 
    captionpos=b,                    
    keepspaces=true,                 
    numbers=left,                    
    numbersep=5pt,                  
    showspaces=false,                
    showstringspaces=false,
    showtabs=false,                  
    tabsize=2,
    emph = {forward, loss, train, predict, __init__},
    emphstyle=\color{purple}\ttfamily
}
\lstset{style=mystyle}

\icmltitlerunning{Graph prediction with Gromov-Wasserstein barycenter}

\newcommand{\R}{\mathbb{R}}
\newcommand{\N}{\mathbb{N}}

\newcommand{\E}{\mathbb{E}}

\newcommand{\bmH}{\mathcal{H}}
\newcommand{\bmK}{\mathcal{K}}
\newcommand{\bmG}{\mathcal{G}}
\newcommand{\bmX}{\mathcal{X}}
\newcommand{\bmL}{\mathcal{L}}

\newcommand{\bmY}{\mathcal{Y}}
\newcommand{\bmZ}{\mathcal{Z}}
\newcommand{\bmF}{\mathcal{F}}
\newcommand{\bmM}{\mathcal{M}}

\newcommand{\bmU}{\mathcal{U}}
\newcommand{\risk}{\mathcal{R}}
\newcommand{\emprisk}{\hat{\risk}}
\newcommand{\plans}{\mathcal{P}}

\newcommand{\un}{\mathds{1}}
\newcommand{\FGW}{\text{FGW}}
\newcommand{\GW}{\text{GW}}

\DeclareMathOperator{\hs}{HS}

\DeclareMathOperator*{\argmin}{arg\,min\,}
\newcommand\subsetsim{\mathrel{%
  \ooalign{\raise0.2ex\hbox{$\subset$}\cr\hidewidth\raise-0.8ex\hbox{\scalebox{0.9}{$\sim$}}\hidewidth\cr}}}
  



\newcommand{\rf}[1]{ {\color{black}#1}}

\newcommand{\cam}[1]{\textcolor{black}{#1}}

\begin{document}

\twocolumn[
\icmltitle{Learning to Predict Graphs with Fused Gromov-Wasserstein Barycenters}



\icmlsetsymbol{equal}{*}

\begin{icmlauthorlist}
\icmlauthor{Luc Brogat-Motte}{tel}
\icmlauthor{Rémi Flamary}{poly}
\icmlauthor{Céline Brouard}{inrae}
\icmlauthor{Juho Rousu}{aalto}
\icmlauthor{Florence d'Alché-Buc}{tel}
\end{icmlauthorlist}

\icmlaffiliation{tel}{LTCI, Télécom Paris, Institut Polytechnique de Paris, France}
\icmlaffiliation{poly}{Ecole Polytechnique, Institut Polytechnique de Paris, CMAP, UMR 7641, Palaiseau, France}
\icmlaffiliation{inrae}{Université de Toulouse, INRAE, UR MIAT, France}
\icmlaffiliation{aalto}{Department of Computer Science, Aalto University, Finland}
\icmlcorrespondingauthor{Luc Brogat-Motte}{luc.motte@telecom-paris.fr}

\icmlkeywords{Machine Learning, ICML}

\vskip 0.3in
]



\printAffiliationsAndNotice{}  

\begin{abstract}
This paper introduces a novel and generic framework to solve the flagship task
of supervised labeled graph prediction by leveraging  Optimal Transport tools. 
We formulate the problem as regression with the Fused Gromov-Wasserstein (FGW) loss and propose a predictive model relying on a FGW barycenter whose weights depend on inputs. First we introduce a non-parametric estimator based on kernel ridge regression for which theoretical results such as consistency and excess risk bound are proved. Next we propose an interpretable parametric model where the barycenter weights are modeled with a neural network and the graphs on which the FGW barycenter is calculated are additionally learned. Numerical experiments show the strength of the method and its ability to interpolate in the labeled graph space on simulated data and on a difficult metabolic identification problem where it can reach very good performance with very little engineering.


\end{abstract}

\section{Introduction}\label{sec:intro}

Graphs allow to represent entities and their
interactions. They are ubiquitous in real-world: social networks, molecular structures, biological protein-protein networks, recommender systems, are naturally represented as graphs. Nevertheless, graphs structured data can be challenging to process. An important effort has been made to design well-tailored machine learning methods for graphs. For example, many kernels for graphs have been proposed allowing to perform graph classification, graph clustering, graph regression \citep{kriege2020survey}. Many deep learning architecture have also been developped \citep{zhang2020deep}, including Graph Convolutional Networks (GCNs) that are powerful models for processing graphs. 

Most of existing works in machine learning consider graphs as inputs, but predicting a graph as output given an input from an arbitrary input space has received much less attention.
In this work, we target the difficult problem of supervised learning of graph-valued functions. In contrasts with node classification \citep{bhagat2011node}, or link prediction \citep{lu2011link}, entire graphs are predicted.  
Supervised Graph Prediction (SGP)  can be considered as an emblematic instance of Structured Prediction (SP) with the difficulty that the output space is of finite but huge cardinality and contains structures of different sizes. In principle, any of the three main approaches to SP, energy-based models, surrogate approaches and end-to-end learning, are eligible. In energy-based models \citep{tsochantaridis2005large,chenb15,belanger2016structured}, predictions are obtained by maximizing a score function for input-output pairs over the output space. In surrogate approaches \citep{cortes05,geurts2006, brouard2016input,ciliberto2016consistent}, a feature map is used to embed the structured outputs. After minimizing a surrogate loss a decoding procedure is used to map back the surrogate solution. End-to-end learning methods attempt to solve structured prediction by directly learning to generate a structured object \citep{belanger2017,silver2017predictron} and leverage differentiable and relaxed definition of energy-based methods (see for instance \citet{pillutla2018smoother,mensch2018diff}). 

Nevertheless, to our knowledge, among surrogate methods, only Input Output Kernel Regression (IOKR) \citep{brouard2016input} that leverages kernel trick in the output space has been successfully applied to SGP while on the side of end-to-end learning, several generative models allow to build and predict graphs but in general in an unsupervised setting. \citet{gomez2018automatic} try to obtain a continuous representation of molecules using a variational autoencoding (VAE) of text representations of molecules (SMILES). \citet{kusner2017grammar} incorporates in the VAE architecture knowledge about the structure of SMILES thanks to its available grammar. \citet{olivecrona2017molecular, liu2017retrosynthetic, li2018learning, you2018graph, shi2020graphaf} propose models that generate graphs using a sequential process generating one node/edge at a time, and train it by maximizing the likelihood.

In supervised graph prediction, the crucial issue is to learn or leverage appropriate representations of graphs, a problem tightly linked with the choice of a loss function.
Typical graph representations usually rely on graph kernels leveraging fingerprint representations, i.e. a bag of motifs approach \citep{ralaivola}, or more involved kernels such the Weisfeiler-Lehman kernel \citep{shervashidze2011weisfeiler}. 
In this work, we propose to exploit another kind of graph representation, opening the door to the use of an Optimal Transport loss, and derive an end-to-end learning approach that constrasts to energy-based learning and surrogate methods.

\rf{Successful applications of optimal transport (OT) in machine learning are becoming increasingly numerous thanks to the advent of numerical optimal transport \citep{cuturi2013sinkhorn, altschuler2017near, peyre2019computational}. Examples include domain adaptation \citep{courty2016optimal}, unsupervised learning \citep{arjovsky2017wasserstein}, multi-label classification \citep{frogner2015learning}, natural language processing \citep{kusner2015word}, fair classification \citep{gordaliza2019obtaining}, supervised representation learning \citep{flamary2018wasserstein}. Optimal transport provide meaningful distances between probability distributions, by leveraging the geometry of the underlying metric spaces.\\ 
Supervised learning with optimal transport losses has been considered in \citet{frogner2015learning, bonneel2016wasserstein, luise:hal-01958887,mensch2019geometric} for predicting histograms. But traditional OT loss can be applied only between distributions lying in the same space, preventing their use on structured data such as graphs.  \citet{menoli2011} proposed  the 
Gromov-Wasserstein distance that can measure similarity between metric measure space and has been used as a distance between graphs in several applications such as computing graph barycenters \cite{peyre2016gromov} or for performing graph node embedding \cite{xu2019gromov} and graph partitioning \cite{xu2019scalable}. This distance has been extended to the Fused Gromov-Wasserstein distance (FGW) in \citet{vayer2019optimal,vayer2020fused} with applications to attributed graphs  classification, barycenter estimation and more recently dictionary learning \cite{vincent2021online}. Those novel divergences that can be used on graphs are a natural fit, first as a loss term in graph prediction but also as a way to model the space of graphs for instance using FGW barycenters.}

\paragraph{Contributions.}
\rf{In this paper we present the following novel contributions. First we propose a novel and and general framework in Sec. \ref{sec:problem-setting} for graph prediction building on FGW as a loss and FGW barycenter as a way to interpolate in the target space. The framework is studied theoretically in Sec. \ref{sec:nonparam} in the non-parametric case for which we provide consistency and excess risk bounds. Then a parametric version of the model building on deep neural network and learning of the template graphs is proposed in Sec. \ref{sec:nns} with a simple stochastic gradient algorithm. Finally we provide some numerical experiments in Sec. \ref{sec:num_exp} on synthetic and real life metabolite prediction datasets.}

\section{Background on OT for graphs}\label{sec:background}

We begin by introducing how to represent graphs and define distances between graph by leveraging the Fused Gromov-Wasserstein distance.

\paragraph{Notations.} $\un_n$ is the all-ones vector with size $n$. $\delta_x$ denotes the Dirac measure in $x$ for $x$ in a measurable space. Identity matrix in $\R^{N \times N}$ is noted $I_N$. $\mathcal{L}(\mathcal{A})$ the set of bounded linear operator from $\mathcal{A}$ to $\mathcal{A}$.  $\mathcal{M}(\mathcal{A}, \mathcal{B})$ the set of measurable functions from $\mathcal{A}$ to $\mathcal{B}$.

\paragraph{Graph represented as metric measure spaces.} Denote $n_{max} \in \mathbb{N}^*$ the maximal number of nodes (vertices) in the graphs we consider in this paper. We define $\bmF \subset \R^d$ a finite feature space of size $| \bmF | < \infty$. A labeled graph $y$ of $n \leq n_{max}$ nodes is represented by a triplet $y=(C,F,h)$ where $C=C^T \in \{0,1\}^{n \times n}$ is the adjacency matrix, and $F=(F_i)_{i=1}^n$ is a $n$-tuple composed of feature vectors $F_i \in \bmF \subset \R^d $ labeling each node indexed by $i$. The space of labeled graphs $\bmY$ is thus defined as $\bmY = \{(C, F, h) \,|\, n \leq n_{max}, C \in \{0,1\}^{n \times n}, C^T = C, F=(F_i)_{i=1}^n \in \bmF^{n}, h= \frac{1}{n}\un_{n}\}$.
Observe that we equipped all graphs with a uniform discrete probability distributions over the nodes \rf{$\mu= \sum_{i=1}^n h_i \delta_{u_i}$ where $u_i= (v_i, F_i)$  represents the structure $v_i$ (encoded only through $C(i,j),\,\forall j$) and the feature information $F_j$ attached to a vertex $i$ \cite{vayer2019optimal}.}
These weights indicate the relative importance of the vertices in the graph. In absence of this information, we simply fix uniform weights $h_i=\frac{1}{n}$ for a graph of size $n$. Now, let us introduce the space of continuous relaxed graphs with {\bf fixed size} $n$: $\bmZ_n = \{(C, F, h) \,|\, C \in [0,1]^{n \times n}, C^T = C, F \in \text{Conv}(\bmF)^n, h= n^{-1}\un_{n}\}$. $\text{Conv}(\bmF)$ denotes the convex hull of $\bmF$ in $\R^{n \times d}$. We call $\bmZ= \bigcup (\bmZ_i)_{i=1}^{n_{max}}$ \rf{and want to emphasize that $\bmY\subset\bmZ$.}

\paragraph{Gromov-Wasserstein (GW) distance. }\label{subsec:method} The Gromov-Wassertein distance between  metric measure space has been introduced by \citet{menoli2011} for object matching. The GW distance defines an OT problem to compare these objects, with the key property that it defines a strict metric on the collection of isomorphism classes of metric measure spaces.  In this paper, we adopt this angle to address graph representation and graph comparison, opening the door to define a loss for supervised graph prediction.
Let  $z_1 = (C_1, n_{1}^{-1}\mathds{1}_{n_{1}})$ and $z_2 = (C_2, n_{2}^{-1}\mathds{1}_{n_{2}})$ be the representation of two  graphs with respectively $n_1 \in
\N^*$ and $n_2 \in \N^*$ nodes, the Gromov-Wasserstein (GW) distance between
$z_1$ and $z_2$, $\GW^2_2(z_1, z_2)$, is defined as follows:
\begin{align}\small
     \min\limits_{\pi \in \plans_{n_{1},n_{2}}} \sum_{i,k=1}^{n_1} \sum_{j,l=1}^{n_2} (C_1(i,k) - C_2(j,l))^2 \pi_{i,j}\pi_{k,l},
\end{align}
where $\plans_{n_1,n_2} = \{ \pi \in \R^{n_1 \times n_2}_+ | \pi\mathds{1}_{n_{2}} = n_{1}^{-1}\mathds{1}_{n_{1}}, \pi^T\mathds{1}_{n_1}=n_{2}^{-1}\mathds{1}_{n_{2}}\}$.
$\GW_2$ can be used to compare unlabeled graphs with potentially different numbers of nodes, it is symmetric, positive and satisfies the triangle inequality. Furthermore, it
is equal to zero when $z_1$ and $z_2$ are isomorphic, namely when there exist a
bijection $\phi:  \llbracket 1, n_1 \rrbracket \rightarrow \llbracket 1, n_2
\rrbracket$ such that $C_2(\phi(i), \phi(j)) = C_1(i, j)$ for all $i, j \in
\llbracket 1, n_1 \rrbracket$. GW provides a distance on the unlabeled graph
quotiented by the isomorphism, making it a natural metric when comparing
graphs.

\paragraph{Fused Gromov-Wasserstein (FGW) distance. } The FGW distance has been proposed recently as an extension of GW that can be used to measure the similarity between attributed graphs \citep{vayer2020fused}.
 For a given $0 \leq \beta \leq 1$, the FGW distance between two labeled weighted graphs represented as $z_1= (C_1, F_1, n_1^{-1}\un_{n_1})$ and $z_2= (C_2, F_2, n_2^{-1}\un_{n_2})$ is defined as follows \citep{vayer2020fused}: 
\begin{align*}\label{eq:fgw}
    \begin{split}
        \FGW^2_2(z_1, z_2) =& \min\limits_{\pi \in \plans_{n1,n2}} \sum_{i,k,j,l} \big[(1-\beta)\|F_1(i) - F_2(j)\|_{\R^d}^2\\ 
   &+ \beta (C_1(i,k) - C_2(j,l))^2\big] \pi_{i,j}\pi_{k,l}.
    \end{split}
\end{align*}
The optimal transport plan matches the vertices of the two graphs by
minimizing the discrepancy between the labels, while preserving the pairwise
similarities between the nodes. Parameter $\beta$ governs the trade-off between structure and label information. Its choice is typically driven by the application.  

%

\section{Graph prediction with Fused Gromov-Wasserstein}\label{sec:problem-setting}
\begin{figure*}[t]
\begin{center}
\centerline{\includegraphics[width=.9\textwidth]{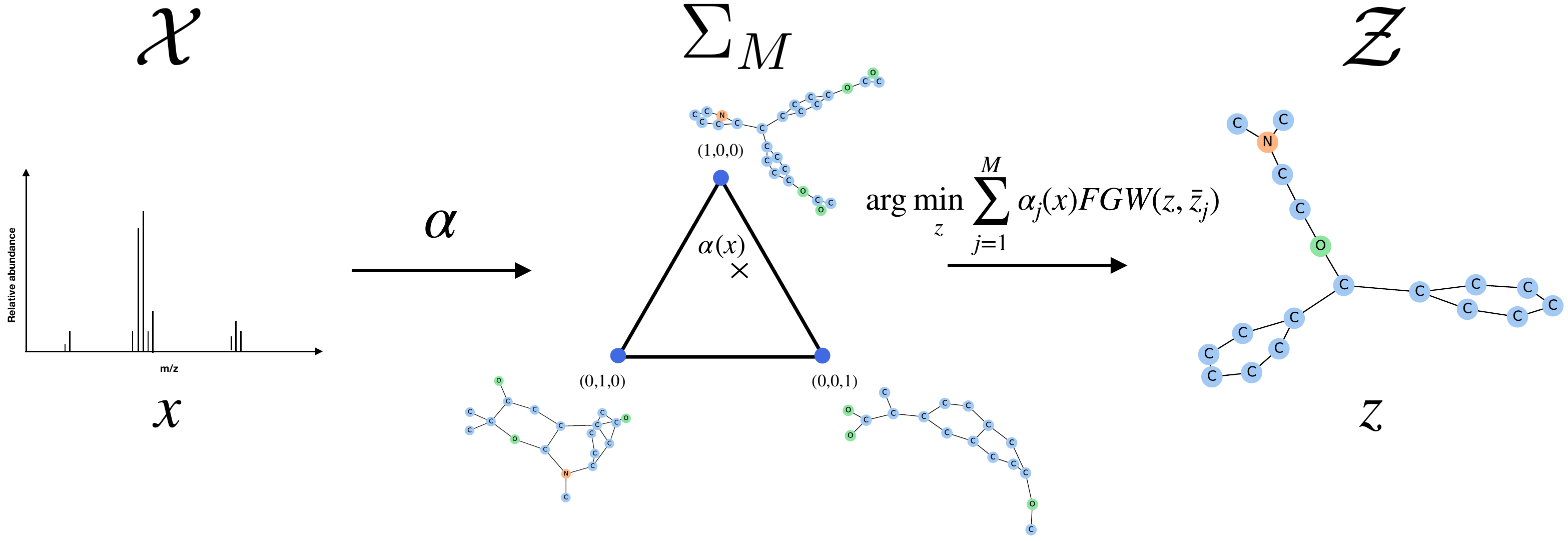}}\end{center}\vspace{-10mm}
\caption{Proposed supervised graph prediction model. The input $x$ (left) is mapped with $\boldsymbol{\alpha}(x)$ onto the simplex (center) where the weights are used for computing the prediction as a FGW barycenter (right). }\vskip-3mm
\label{fig:model}
\end{figure*}

\paragraph{Relaxed Supervised Graph Prediction.} 

In this work, we consider labeled graph prediction as a {\it relaxed} structured output prediction problem. We assume that $\bmX$ is the input space and that the predictions belong to the space $\bmZ_n$ defined in Section \ref{sec:background}, for a given value of $n$, while we observe training data in the finite set $\bmY$. We define an asymmetric partially relaxed structured loss function $\Delta: \bmZ_n \times \bmY \rightarrow \R^+$. Given a finite sample $(x_i, y_i)_{i=1}^N$ independently drawn from an unknown distribution $\rho$ on $\bmX \times \bmY$, we consider the problem of estimating a target function $f^*: \bmX \rightarrow \bmZ_n$ with values in the structured objects $\bmZ_n$ that minimizes the expected risk:
\begin{equation}\label{eq:exp_structured_risk}
    \risk_{\Delta}^n(f) = \E_{\rho}[\Delta(f(X), Y)],
\end{equation}
by an estimate $\hat{f}$ obtained by minimizing the empirical counterpart of the true risk, namely the empirical risk:
\begin{equation}\label{eq:emp-risk}
\emprisk_{\Delta}^n(f) = \sum_{i=1}^N \Delta(f(x_i), y_i),
\end{equation}
over the hypothesis space $\bmG^n \subset \bmM(\bmX, \bmZ_n)$. 
The goal of this paper is to provide a whole framework to address this family of problems instantiated by $n \leq n_{max}$. Note that the complexity of the task depends primarily on $n$.  

\paragraph{FGW as training loss.}
We propose in this paper to use the FGW distance as the loss. More precisely, we define:
\begin{equation}\label{eq:loss-delta-fgw}
\forall (z,y) \in \bmZ_n \times \bmY,~\Delta_{\FGW}(z,y):=\FGW_2^2(z,z_y),
\end{equation}
where $z_y = (C_y,F_y,n_y^{-1}\un_{n_y}) \in \bmZ_{n_y} \subset \bmZ$ is the representation of  $y=(C_y,F_y,n_y^{-1}\un_{n_y}) \in \bmY$.
As FGW is defined for graphs of different sizes, the expression in Eq. \eqref{eq:loss-delta-fgw} is well posed.
Accordingly, for all $i=1, \ldots N$, we denote $z_i \in \bmZ_{n_i}$ the relaxed version of $y_i \in \bmY$ with number of nodes $n_i$.

\paragraph{Supervised Graph Prediction with FGW.}

Having fixed a value for $n$ and following these definitions, the empirical risk minimization problem now writes as follows.
Given the training sample $\{(x_i, z_i)_{i=1}^N\}$, we want to find a minimizer over $\bmG^n \subset \bmM(\bmX, \bmZ_n)$ of the following problem:
\begin{equation}\label{eq:emp-risk-pb}
\min_{f \in \bmG^n} \sum_{i=1}^N \FGW_2^2(f(x_i), z_i). 
\end{equation}
\cam{
\begin{remark}[Role of the graph sizes for the FGW distance] For the $\FGW$ distance, it is worth noting that graphs' sizes act as resolutions, namely levels of prevision in the description of graphs. We denote by $\overset{\FGW}{\subset}$ the subset symbol for the equivalence classes induced by the $\FGW$ metric. We approximately have, for $n < n', Z_{n} \overset{\FGW}{\subsetsim}Z_{n'}$ depending if exact or approximate resampling is possible. For instance, we exactly have, for all $n \in \mathbb{N}^*, Z_{n} \overset{\FGW}{\subset}Z_{2n}$. It means that low-resolution graphs can be represented exactly as high-resolution graphs. Conversely, one can approximate a high-resolution graph with a low-resolution graph. This property is leveraged in the model hereinafter proposed. Note that if one wants to compare two graphs, with equal weights on each node, it is still possible to do padding: add nodes with no neighbours, and with a chosen constant label.
\end{remark}
}

\paragraph{Structured prediction model.}
To address this structured regression problem, we propose a generic model
$f_\theta: \bmX \to \bmZ_n$ expressed as a {\bf conditional FGW barycenter}
computed over $M$ template graphs $\bar z_j \in \bmZ$ (See Figure \ref{fig:model}):
\begin{equation}\label{eq:fgw_estimator}
    f_\theta(x) = \argmin_{z \in \bmZ_n} \sum_{j=1}^M \alpha_j(x; W) \FGW^2_2(z, \bar z_j), 
\end{equation}
where the weights $\alpha_j(x; W): \bmX \rightarrow \R^+$ are functions that can be understood as similarity scores between $x$ and $x_j$. We include in a single parameter $\theta=(M,(\bar z_j)_{j=1}^M, W)$ all model's parameters.

A key feature of the proposed model $f_{\theta}$ is that it interpolates in the
graph space $\bmZ$ by using the Fr\'echet mean with respect to the FGW distance.
Therefore, it inherits the good properties of FGW, especially including the
invariance under isomorphism (two isomorphic graphs have equal scores in Eq.
\eqref{eq:fgw_estimator}). \rf{Moreover, in terms of computations, the proposed model
leverages the recent advances in computational optimal transport such as  Conditional Gradient descent \cite{vayer2019optimal} or Mirror descent for (F)GW with entropic regularization \cite{peyre2016gromov}.}


\paragraph{Properties of $f_\theta$.}
Relying on recent works that studied in a large extent GW and FGW barycenters, we now discuss the shape of the recovered objects \citep[Eq. 14]{peyre2016gromov, vayer2020fused}. The evaluation of $f_\theta$ on input $x$ writes as follows: $f_{\theta}(x) = (C(x;\theta), F(x; \theta), n^{-1} \un_{n})$, where the structure and feature barycenters are:
\begin{align}\vspace{-2mm}\label{eq:C}
C(x;\theta) &= n^2\sum_{j=1}^M \alpha_j(x; W) \bar \pi_j^T\bar C_j\bar \pi_j~\in [0,1]^{n \times n},\\
\label{eq:A}
    F(x; \theta) &= n\sum_{j=1}^M \alpha_j(x;W)\bar  F_j \bar \pi_j^T~\in  \R^{n \times d}.\vspace{-2mm}
\end{align}
The $(\bar\pi_j)_j$ are the optimal transport plans from $(\bar C_j, \bar F_j)_j$ to the
barycenter $(C(x;\theta), F(x;\theta))$ \citep[Eq. (8)]{cuturi2014fast} , and thus depend on $\theta$. 
Note that a very
appealing property of using FGW barycenter is that the order
$n$ (that fixes the prediction space $\bmZ_n$) of the prediction does not depend on the parameters $\theta$. This means that a unique trained model can predict several objects with a different resolution $n$ 
allowing better interpretation at small
resolution and finer modeling at higher resolution. This will be illustrated in
the experimental section.
\\
In the next sections, we propose two different approaches to learn and define
the conditional barycenter. The first one in Section \ref{sec:nonparam} leads to a purely nonparametric
estimator with $M= N$ and $\bar z_j = z_j$ and the second one proposed in Section
\ref{sec:nns} relies on a deep
neural network for the weight functions $\alpha_j$s' while the template graphs
$(\bar z_j)_{j=1}^M$ are learned as well.

\section{Nonparametric conditional Gromov-Wasserstein barycenter}
\label{sec:nonparam}

\paragraph{Non-parametric estimator with kernels.}
Before addressing the general problem of learning both the template graphs and
the weight function $\alpha$, we adopt a nonparametric point of view to address
the structured regression problem. Under some conditions we recover a FGW conditional
barycenter estimator of the following form:
\begin{align}\label{eq:nonpar_fgw_estimator}
    f_W(x) = \argmin_{z \in \bmZ_n} \sum_{j=1}^N \alpha_j(x; W) \FGW^2_2(z, z_j), 
\end{align}
where $\theta = W $ is now the single parameter to learn and the template graphs $\bar z_j$  are not estimated but set as all the training samples $z_j$.
Similarly to scalar or vector-valued regression, one can find many different ways to define the weight functions $\alpha_i$ in the large family of nonparametric estimators \citep{geurts2006, ciliberto2020general}. We propose here a kernel approach that leverages kernel ridge regression.

Defining a positive definite kernel on the input space $k: \bmX \times \bmX \rightarrow \R$, one can consider the coefficients of kernel ridge estimation as in \citet{brouard2016input,ciliberto2020general} to define the weight function $\alpha: \bmX \to \R^N$: 
\begin{equation}\label{eq:krr_weights}
    \alpha(x) = (K + \lambda I_N )^{-1} k_x 
\end{equation}
with the Gram matrix $K = (k(x_i, x_j))_{ij} \in \R^{N \times N}$ and the vector $k_x^T=(k(x, x_1), \dots, k(x, x_N))$. Such a model leverages learning in vector-valued Reproducing Kernel Hilbert Spaces and
is rooted in the Implicit Loss Embedding (ILE) framework proposed and studied by \citet{ciliberto2020general}. 
\begin{example}
In the metabolite identification problem (see Section \ref{sec:num_exp}), the input takes the form of tandem mass spectra.
A typical relevant kernel $k$ for such data is the probability product kernel (PPK) \citep{heinonen2012}.
\end{example}

\subsection{Theoretical justification for the proposed model}\label{sec:theory}


The framework SELF \citep{ciliberto2016consistent} and its extension ILE \citep{ciliberto2020general} concerns general regression problems defined by an asymmetric loss $\Delta: \bmZ \times \bmY \rightarrow \R$ that can be written using output embeddings, allowing to solve a surrogate regression problem in the output embedding space. We recall the ILE property and the resulting benefits, especially when working in vector-valued Reproducing Kernel Hilbert Space. 

\begin{definition}[ILE] For given spaces $\bmZ, \bmY$, a map $\Delta: \bmZ \times \bmY \rightarrow \R$ is said to admit an Implicit Loss Embedding (ILE) if there exists a separable Hilbert space $\bmU$ and two measurable bounded maps $\psi: \bmZ \rightarrow \bmU$ and $\phi: \bmY \rightarrow \bmU$, such that for any $z \in \bmZ, y \in \bmY$:~
$\Delta(z, y) = \langle \psi(z),\, \phi(y)\rangle_{\bmU}.$
\end{definition}
Note that this definition highlights an asymmetry between the processing of $z$ and $y$. A regression problem based on a loss satisfying the ILE condition enjoys interesting properties. 
The following true risk minimization problem:
$\min_f \E_{\rho}[\Delta(f(X), Y)]:= \E_{\rho}[ \langle \psi(f(X)),\, \phi(Y)\rangle_{\bmU}],$
can be converted into i) a surrogate (intermediate) and simpler least-squares regression problem into the implicit embedding space $\bmU$, i.e.  $\min_{h : \bmX \rightarrow \bmU} \E_{\rho}[\| h(X) - \phi(Y) \|^2_{\bmU}]$, and ii) a decoding phase: $f^*(x) := \argmin_z \langle \psi(z),\, h^*(x)\rangle_{\bmU},$~ where $h^*$ is solution of problem i), i.e. $h^*(x) = \E[\phi(Y)|x]$.
A nice property proven by \citet{ciliberto2020general} is the one of Fisher
consistency,  $f^*$ is exactly the minimizer of problem in Eq.
\eqref{eq:exp_structured_risk}, justifying the surrogate approaches.

\paragraph{Structured prediction with implicit embedding and kernels.}
Assuming the loss $\Delta$ is ILE, when relying on a i.i.d. training sample
$\{(x_i,y_i)_{i=1}^N\}$, one gets $\hat{h}$ an estimator of $h^*$  by minimizing
the corresponding (regularized) empirical risk and then builds $\hat{f}$. 

If we choose to search $\hat{h}$ in the vector-valued Reproducing Kernel Hilbert Space $\bmH_{\bmK}$ associated to the decomposable operator-valued kernel $\bmK: \bmX \times \bmX \to \bmL(\bmU)$ of the form $\bmK(x,x')= I_{\bmU} k(x,x')$ where $k$ is the positive definite kernel defined in Section \ref{sec:nonparam} and $I_{\bmU}$ is the identity operator on the Hilbert space $\bmU$, then the solution to the problem:
   $$ \min_{h \in \bmH_{\bmK}} \sum_{i=1}^N \| h(x_i) - \phi(y_i) \|^2_{\bmU} + \lambda \| h \|_{\bmH_{\bmK}^2},$$

for $\lambda > 0$, writes as 
$\hat{h}(x) = \sum_{i=1}^N \alpha_i(x) \phi(y_i)$
with $\alpha(x)$ verifying Eq. \eqref{eq:krr_weights}. Then, $\hat{f}(x)$ can be expressed as
\begin{align*}
 \arg \min_{z \in \bmZ} \left\{\langle \psi(z),\sum_{i=1}^N \alpha_i(x) \phi(y_i) \rangle  =\sum_{i=1}^N \alpha_i(x) \Delta(z, y_i)  \right\}
\end{align*}

We show in the following proposition that $\Delta_{FGW}$ admits an ILE. This allows us to obtain theoretical guarantees from \citet{ciliberto2020general} for our estimator.
\begin{proposition} $\Delta_\FGW$ admits an ILE.
\end{proposition}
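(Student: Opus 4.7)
The plan is to exploit the fact that under the hypotheses of Section~\ref{sec:background}, the target set $\bmY$ is \emph{finite}: graphs are restricted to have at most $n_{max}$ nodes, adjacency matrices take values in $\{0,1\}$, node features live in the finite set $\bmF$, and the vertex weights are fixed to $n_y^{-1}\un_{n_y}$. Hence $|\bmY|<\infty$, and one can build an ILE via the standard one-hot encoding of a finite target set.

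Concretely, I would take $\bmU := \R^{|\bmY|}$ (finite-dimensional, hence a separable Hilbert space) with canonical basis $(e_y)_{y\in\bmY}$, and define
\[
\phi(y) := e_y, \qquad \psi(z) := \sum_{y\in\bmY} \Delta_\FGW(z,y)\, e_y .
\]
By construction $\langle \psi(z),\phi(y)\rangle_\bmU = \Delta_\FGW(z,y)$ for every $(z,y)\in\bmZ_n\times\bmY$, which is the required identity.

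It remains to verify the regularity conditions. The map $\phi$ is clearly bounded and measurable. For $\psi$, boundedness follows from a uniform upper bound on $\Delta_\FGW$: any coupling $\pi\in\plans_{n,n_y}$ satisfies $\sum_{i,j,k,l}\pi_{i,j}\pi_{k,l}=1$, and on $\bmZ_n\times\bmY$ the FGW integrand is dominated by $(1-\beta)\,\mathrm{diam}(\mathrm{Conv}(\bmF))^2+\beta$, so the minimum is bounded by the same constant. Measurability of $\psi$ reduces to measurability of each coordinate $z\mapsto \FGW_2^2(z, z_y)$, which holds because this map is continuous on $\bmZ_n$ by Berge's maximum theorem applied to the jointly continuous quadratic cost over the compact transport polytope $\plans_{n,n_y}$.

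The main subtlety is the continuity (or at least measurability) of the FGW value in its first argument, since the underlying quadratic program is non-convex and its optimal transport plan is generally not unique. Berge's theorem however only requires upper and lower hemicontinuity of the constraint correspondence (here trivially satisfied, as $\plans_{n,n_y}$ does not depend on $z$) together with continuity of the cost, yielding continuity of the value without any uniqueness assumption on the minimizer. Once the ILE is established, the theoretical apparatus of \citet{ciliberto2020general} directly applies to the kernel-based estimator of Eq.~\eqref{eq:nonpar_fgw_estimator}, producing the consistency and excess risk bounds announced in Section~\ref{sec:nonparam}.
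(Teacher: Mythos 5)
Your proof is correct, and it takes a genuinely different route from the paper. The paper establishes three facts — $\bmY$ is finite, $\bmZ_n$ is compact, and $z\mapsto\Delta_\FGW(z,y)$ is continuous (proved in the appendix by a direct $\mathcal{O}(\|dz\|)$ estimate on the difference of the two minima) — and then cites Theorem~7 of Ciliberto et al.\ (2020) as a black box to conclude the ILE exists. You instead \emph{construct} the ILE explicitly: set $\bmU=\R^{|\bmY|}$, $\phi(y)=e_y$, $\psi(z)=\sum_{y}\Delta_\FGW(z,y)\,e_y$, which is the canonical one-hot embedding for a finite target set, and then verify boundedness (via the uniform bound $(1-\beta)\,\mathrm{diam}(\mathrm{Conv}(\bmF))^2+\beta$ on the FGW cost over $\bmZ_n\times\bmY$) and measurability (via Berge's maximum theorem, which is sound here: the transport polytope $\plans_{n,n_y}$ is a fixed compact set independent of $z$, and the objective is jointly continuous in $(z,\pi)$, so the optimal value is continuous in $z$ with no uniqueness requirement on the minimizer). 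Both arguments rest on the same three pillars; the trade-off is that your construction is fully self-contained and makes the embedding concrete, whereas the paper's citation is shorter and applies more generally — Ciliberto et al.'s Theorem~7 would also cover, e.g., a merely compact (non-finite) $\bmY$, where the one-hot encoding is unavailable. Your Berge-based continuity argument is a clean alternative to the paper's explicit perturbation bound in Lemma~A.1.
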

\begin{proof}
 $\bmY$ is a finite space by definition. $\bmZ_n$ is a compact space as $[0,1]^{n \times n}$ and $\text{Conv}(\bmF)^n$ are compact ($\bmF$ is finite). Moreover, $\forall\, y \in \bmY,  z \rightarrow \Delta_\FGW(z, y)$ is a continuous map (See Lemma \ref{lem:continuity}). Therefore, according to Theorem 7 from \citet{ciliberto2020general} $\Delta_\FGW: \bmZ_n \times \bmY \rightarrow \R$ admits an ILE.
\end{proof}




\subsection{Excess-risk bounds}
Since $\Delta_{FGW}$ is ILE, the proposed estimator enjoys consistency (See Theorem \ref{th:consistency} in Appendix). Moreover, under an additional technical assumption (Assumption \ref{as:2} in Appendix), it verifies the following excess-risk-bound.

\begin{restatable}[Excess-risk bounds]{theorem}{th2}\label{th:learning_bounds} Let $k$ be a bounded continuous reproducing kernel such that $\kappa^2 := \sup_{x \in \bmX} k(x,x) < +\infty$. Let $\rho$ be a distribution on $\bmX \times \bmY$. Let $\delta \in (0,1]$ and $N_0$ sufficiently large such that $N_0^{-1/2} \geq \frac{9\kappa^2}{N_0}\log\frac{N_0}{\delta}$. Under Assumption \ref{as:2}, for any $N \geq N_0$, if $f_W$ is the proposed estimator built from $N$ independent couples $(x_i, y_i)_{i=1}^N$ drawn from $\rho$. Then, with probability $1-\delta$
\begin{align}
    \mathcal\risk_{\Delta}^n(f_W) - \risk_{\Delta}^n(f^*) \leq c \, \log(4/\delta) \, N^{-1/4},
\end{align}
with $c$ a constant independent of $N$ and $\delta$.
\end{restatable}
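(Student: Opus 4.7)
The plan is to invoke the Implicit Loss Embedding (ILE) framework of Ciliberto et al.\ 2020, now that the previous proposition has established that $\Delta_{\FGW}$ admits an ILE with maps $\psi:\bmZ_n\to\bmU$, $\phi:\bmY\to\bmU$, both measurable and bounded (since $\bmZ_n$ is compact, $\bmY$ is finite, and the loss is continuous). The estimator $f_W$ defined by Eq.~\eqref{eq:nonpar_fgw_estimator} with the kernel ridge weights \eqref{eq:krr_weights} is exactly the ILE decoding of the surrogate kernel ridge regressor
\[
\hat h(x)=\sum_{i=1}^N \alpha_i(x)\,\phi(y_i)\in\bmH_{\bmK},
\]
so the excess risk can be studied through the surrogate regression problem in $\bmU$.

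First I would apply the comparison inequality from the ILE framework (Theorem~4 / Proposition~5 in Ciliberto et al.\ 2020) which, using boundedness of $\psi$, gives a constant $c_\Delta$ such that
\[
\risk_{\Delta}^n(f_W)-\risk_{\Delta}^n(f^*)\;\leq\; c_\Delta\,\bigl\|\hat h - h^*\bigr\|_{L^2(\rho_\bmX;\bmU)},
\]
where $h^*(x)=\E[\phi(Y)\mid X=x]$ is the Bayes surrogate regressor. This reduces the problem to controlling the vector-valued least-squares excess risk of kernel ridge regression in the RKHS $\bmH_{\bmK}$ associated with the decomposable kernel $\bmK(x,x')=I_{\bmU}\,k(x,x')$.

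Second I would invoke the standard high-probability rate for kernel ridge regression in $\bmH_{\bmK}$ (Theorem~5 in Ciliberto et al.\ 2020). Assumption~\ref{as:2} is the usual \emph{attainability} condition $h^*\in\bmH_{\bmK}$ (or a source condition of that strength), and together with $\kappa^2=\sup_x k(x,x)<\infty$ and boundedness of $\phi$, this yields, for $\lambda\asymp N^{-1/2}$ and $N\ge N_0$,
\[
\bigl\|\hat h - h^*\bigr\|_{L^2(\rho_\bmX;\bmU)}\;\leq\; c'\,\log(4/\delta)\,N^{-1/4}
\]
with probability at least $1-\delta$. The threshold $N_0^{-1/2}\ge 9\kappa^2 N_0^{-1}\log(N_0/\delta)$ is precisely the one appearing in that theorem to control the operator concentration used in the proof. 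Plugging this into the comparison inequality and setting $c=c_\Delta c'$ yields the claimed bound.

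The main obstacle is not really the calculation but the verification that the hypotheses of the Ciliberto et al.\ theorems apply verbatim in our relaxed setting: one must check that the ILE maps produced for $\Delta_{\FGW}$ land in a separable Hilbert space (which follows from compactness of $\bmZ_n$ and finiteness of $\bmY$ together with continuity of $\FGW_2^2$ in its first argument, cf.\ Lemma~\ref{lem:continuity}), that the $\argmin$ in \eqref{eq:nonpar_fgw_estimator} is attained (again by compactness and continuity), and that $\psi,\phi$ are bounded so that the comparison inequality carries a finite constant. Once these structural points are in place, the excess-risk bound is a direct consequence of combining the ILE comparison inequality with the kernel ridge regression rate, and no new estimate is needed.
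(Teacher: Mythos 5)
Your proposal matches the paper's approach. The paper likewise treats the bound as a direct corollary of the Ciliberto et al.\ (2020) ILE machinery: once the ILE property of $\Delta_{\FGW}$ is established (via the compactness of $\bmZ_n$, finiteness of $\bmY$, and the continuity Lemma~\ref{lem:continuity}) and the attainability Assumption~\ref{as:2} is in place, the excess-risk bound is imported verbatim, with the only substantive work being exactly the structural checks you list.
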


Note that $N^{-1/4}$ is the typical rate for structured prediction problems without further assumptions on the problem \citep{ciliberto2016consistent, ciliberto2020general}. Theorem \ref{th:learning_bounds} relies on the attainability assumption \ref{as:2}. This can be interpreted as the fact that the proposed GW barycentric model defines an hypothesis space which is able to deal with graph prediction problems that are smooth with respect to the FGW metric. This corroborates with the intuition that for such problems FGW interpolation will obtain good prediction results. We illustrate this theoretical insight on a synthetic dataset in the experimental section. Furthermore, both theorems are valid for any $\bmZ_n, n \in \mathbb{N}^*$, that is, they provide guarantees for all regression problems defined in Eq. \eqref{eq:exp_structured_risk} for all $n \in \mathbb{N}^*$.


\section{Neural network-based conditional Gromov-Wasserstein barycenter}\label{sec:nns}


In this section, we discuss how to train a neural network model estimator as
defined in Equation \eqref{eq:fgw_estimator} where 
the template graphs $\bar z_j$ are learned simultaneously with the weight function $\alpha$. This provides a very generic model that inherits the
flexibility of deep neural networks and their ability to learn input data
representation. 

\rf{\paragraph{Parameters of the model.}
First we recap the different parameters that we want to optimize. First, the weights
$\alpha(x,W)$ of the barycenter are modeled by a deep neural network with
parameters $W$. Next the templates $M$ graphs $\bar z_j$ are also estimated
allowing the model to better adapt to the prediction task. It is
important to note that $M$ is also a parameter of the model that will tune the
complexity of the model and will need to be validated in practice. Note that
this parametric formulation is better suited to large scale datasets since the
complexity of the predictor will be fixed by $M$ instead of increasing with the
number of training data $N$ as in non-parametric models.

\paragraph{Stochastic optimization of the model.}
We optimize the parameters of the model using a classical  ADAM
\cite{kingma2014adam} stochastic
optimization procedure where the gradients are taken over samples or minibatches of
the full empirical distribution.\\ 
We now discuss the computation of the stochastic gradient on a training sample
$(x_i,y_i)$. First note that the gradient of $\FGW(f_\theta(x_i),y_i)$
\emph{w.r.t.} $\theta$ is actually the gradient of a bi-level optimization
problem since $f_\theta$ is the solution of a FGW barycenter. The barycenter
solutions expressed in Equations \eqref{eq:C} and \eqref{eq:A} actually depends
on the optimal OT plans $(\bar \pi_j)_j$ of the barycenter that depends themselves on $\theta$.
But in practice the OT plans $(\bar \pi_j)_j$ are solutions of a non-convex and
non-smooth quadratic program and are with high probability on a border of the
polytope \cite{maron2018probably}. This means that we can assume that a small change in $\theta$ will not
change their value and a reasonable differential of $(\bar
\pi_j)_j$ \emph{w.r.t.} $\theta$ is the null vector. This
actually corresponds in Pytorch \citep{paszke2019pytorch} notation to "detach" the OT plan with respect to the input
which is done by default in POT toolbox {\citep{flamary2021pot}}. The gradient of the outer $\FGW$ loss can be easily computed as the gradient of the loss  with the fixed
optimal plan $\pi_i$ using the theorem from \cite{bonnans1998optimization}.
Computing a sub-gradient of the
loss $\FGW(f_\theta(x_i),y_i)$ can then be done with the following steps:
\vspace{-2mm}
\begin{enumerate}
  \setlength{\itemsep}{2pt}
  \setlength{\parskip}{0pt}
  \setlength{\parsep}{0pt}
    \item $(\bar \pi_j)_j \leftarrow$ Compute the barycenter $f_\theta(x_i)$.
    \item $ \pi_i \leftarrow$ Compute the loss $\FGW(f_\theta(x_i),y_i))$.
    \item $ \nabla_\theta \leftarrow$ Compute the gradient of $\FGW(f_\theta(x_i),y_i))$  with fixed OT plans $(\bar \pi_j)_j$ and $\pi_i$.
\end{enumerate}
\vspace{-2mm}
Note that for the matrices $\bar C_j$ in the templates, the stochastic update is actually a projected gradient step onto the set of matrices with components belonging to $[0,1]$.

}

\section{Numerical experiments}\label{sec:num_exp}

\begin{figure*}[ht!]
\vskip 0.1in
    \centering
\centerline{\includegraphics[width=.95\textwidth]{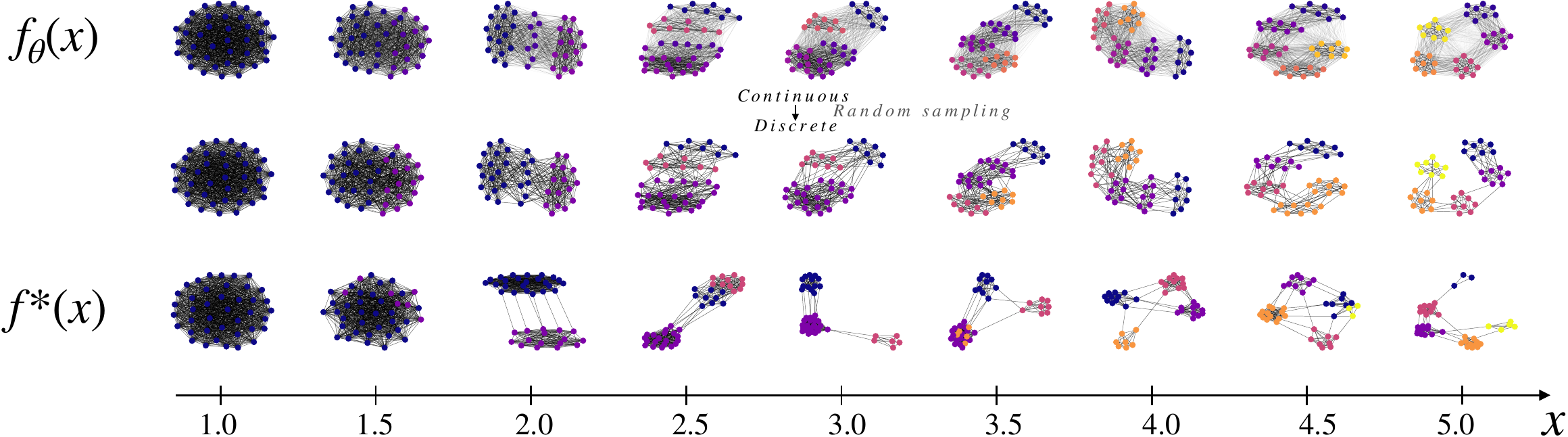}}
\caption{\rf{Graph prediction on the synthetic dataset as a function of the 1D
input $x$.
(top) estimated continuous prediction $f_\theta(x)$, (middle) discrete realizations following
the continuous prediction, (bottom)  true graph prediction function $f^\star (x)$. }}
\label{fig:learned_map}
\vspace{-1mm}
\end{figure*}

In this section, we evaluate the proposed method on a synthetic problem and the metabolite identification problem. 
\cam{A Python implementation of the method is available on github\footnote{\url{https://github.com/lmotte/graph-prediction-with-fused-gromov-wasserstein}}.}

\subsection{Synthetic graph prediction problem}

\paragraph{Problem and dataset. } We consider the following graph prediction
problem. Given an input $x$ drawn uniformly in $[1, 6]$, $y$ is drawn using a
Stochastic block model with $\lfloor x \rfloor$ blocks, such that the biggest
block smoothly splits into two blocks when $x$ is between two integers (see
Figure \ref{fig:learned_map}, bottom line). Each node has a label, which is an
integer indicating the block the node is belonging to. More precisely, we take
randomly from $40$ to $45$ nodes for each graph (uniformly in $\llbracket 40,
45 \rrbracket$. There is a probability $0.9$ of connection between nodes
belonging to the same block, and a probability $0.01$ of connection between
nodes belonging to different blocks. The probability of connection between nodes
belonging to the splitting blocks is $p(x) = 0.889(x - \lfloor x \rfloor) +
0.01$. When a node belongs to the new appearing block its label is the new
block's label with probability $(x - \lfloor x \rfloor)$, and the splitting
block's label otherwise. We generate a training set of $N=50$ couples $(x_i,
y_i)_{i=1}^N$.  Notice that the considered
learning problem is highly difficult as one want to predict a graph from a
continuous value in $[1,6]$.

\paragraph{Experimental setting. } We test the parametric version of the
proposed method with learning of the templates. We use $M=10$ templates, with
$5$ nodes, and initialize them drawing $\bar C_i \in \R^{5 \times 5}, \bar F_i
\in \R^{5 \times 1}$ uniformly in $[0,1]^{5 \times 5}$ and $[0,1]^{5 \times 1}$.
The weights $\alpha(x;W) \in \R^M$ are implemented using a three-layer (\rf{100 neurons in each hidden layer}) fully
connected neural network with ReLU activation functions, and a final softmax
layer. We use $\beta = 1/2$ as FGW's balancing parameter and a prediction size of $n=40$
during training. \rf{During training,  we optimize the parameters $\theta$ of
the model using the continuous relaxed graph prediction model. Interestingly this
prediction provides us with continuous versions of the adjacency matrices so
we can generate discrete graphs by randomly sampling each edge with a Bernouilli
distribution of parameter given by $C(x,\theta)$.}

\paragraph{Supervised learning result. }  \rf{The estimated graph prediction
model on the synthetic dataset is illustrated in  Figure \ref{fig:learned_map}.
We can see that the learned map is 
indeed recovering the evolution of the graphs as a function of $x$. This shows,
as suggested by the theoretical results in Section \ref{sec:nonparam}, that the
FGW metric 
is a a good data fitting term and that FGW barycenters are a good way to
interpolate continuously between discrete objects. This is particularly true on
this problem where a
small change w.r.t $x$ induces small change in the output of $f^*(x)$ according to the FGW
metric. }

\rf{
\paragraph{Interpretability and flexibility of the proposed model. } 
We now illustrate how interpretable is the estimated model.  First we recall that the prediction is actually a Fréchet mean
w.r.t the FGW distance, according to the weights $\alpha_j(x)$ and the templates
$(\bar z_j)_{j=1}^m$. In practice it means that we can plot the template graphs
$(\bar z_j)_{j=1}^m$  to check that the learned
templates are indeed similar (with less nodes) to training
data. But on this synthetic dataset we can also plot the trajectory of the barycenter
weights $\alpha_j$ on the simplex as a function of $x$ which we did in Figure
\ref{fig:simplex}. We can see in the figure that in practice the weights
$\alpha_j(x)$ are sparse concentrated on the templates on the left of the Figure
starting with a graph with one connected cluster and ending with a graph with 5
clusters following the true model $f^\star$.\\
We now illustrate one very interesting property of our model: the ability to
predict graphs with a varying number of nodes $n$ for a given input $x$.
An example of the predicted graphs for $x=5$ is provided in Figure
\ref{fig:nout}. It is interesting to note that even with small templates of
$5$ nodes, the proposed barycentric graph prediction model is able to predict
big graphs while preserving their global structure. 
This is particularly true
for Stochastic Block Models graphs that can by construction be factorized with
a small number of clusters.   Note that the number of nodes
in the templates $(\bar z_j)_{j=1}^m$  can be seen as a regularization parameter. The model is also very flexible in the sens that the
FGW barycenter modeling allows for templates with different number of nodes
allowing for a coarse to fine modeling of the data. 
\begin{figure}[t!]
\begin{center}
\centerline{\includegraphics[width=.8\columnwidth]{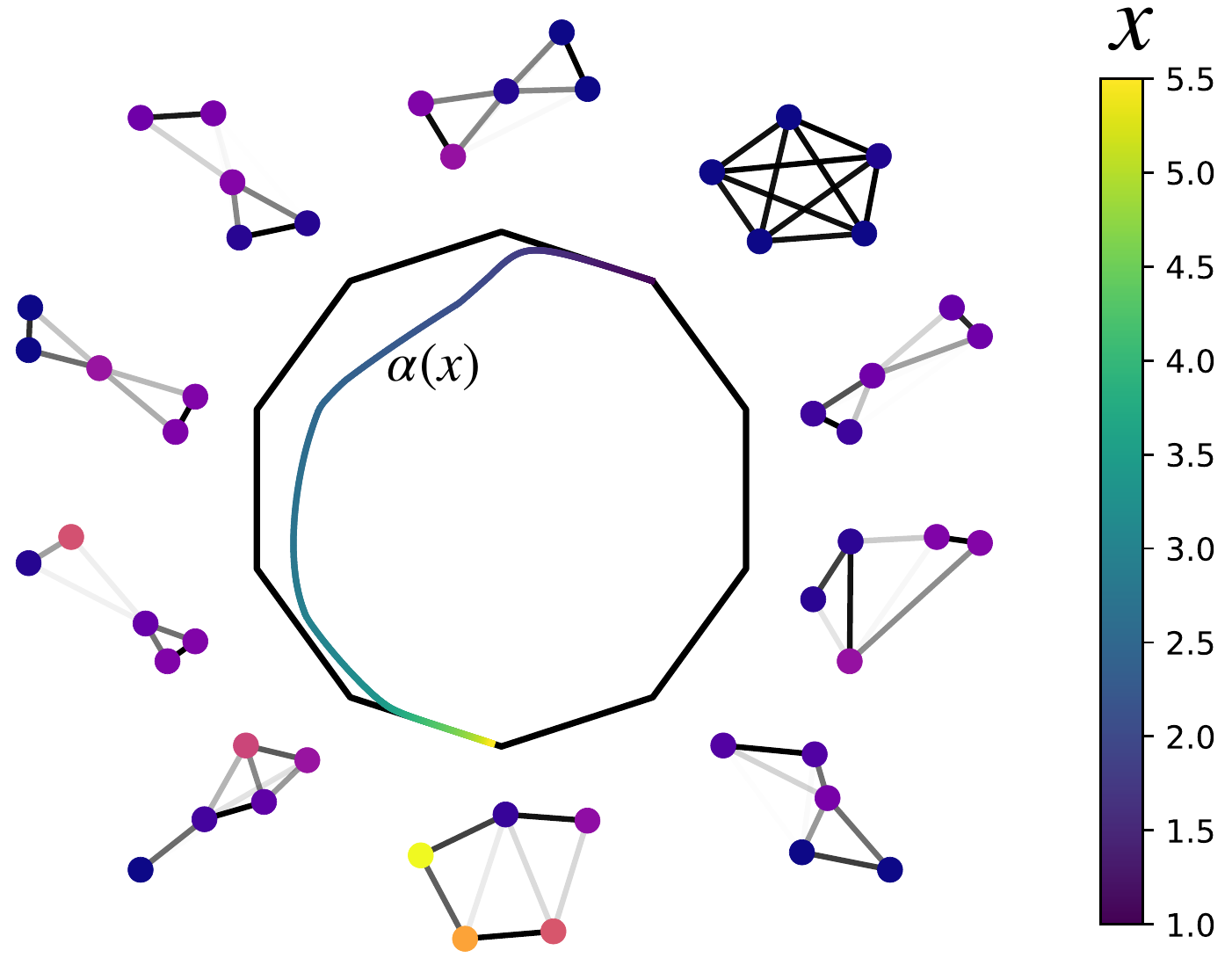}}
\caption{\rf{Learned templates $(\bar z_j)_{j=1}^m$ on the synthetic dataset and trajectory of the
weights $\alpha(x)$ on the simplex as a function of $x$.}}
\label{fig:simplex}
\end{center}
\vskip -0.4in
\end{figure}}
\subsection{Metabolite identification problem}

\paragraph{Problem and dataset. } An important problem in metabolomics is to identify the small molecules, called metabolites, that are present in a biological sample. Mass spectrometry is a widespread method to extract distinctive features from a biological sample in the form of a tandem mass (MS/MS) spectrum. The goal of this problem is to predict the molecular structure of a metabolite given its tandem mass spectrum. Labeled data are expensive to obtain, and despite the problem complexity not many labeled data are available in datasets. Here we consider a set of $4138$ labeled data, that have been extracted and processed in \citet{duhrkop2015searching}, from the GNPS public spectral library \citep{wang2016sharing}. \cam{Datasets and code for reproducing the metabolite identification experiments are available on github\footnote{\footnotesize{\url{lmotte/metabolite-identification-with-fused-gromov-wasserstein}}}.}

\begin{figure}[t!]
\begin{center}
\centerline{\includegraphics[width=\columnwidth]{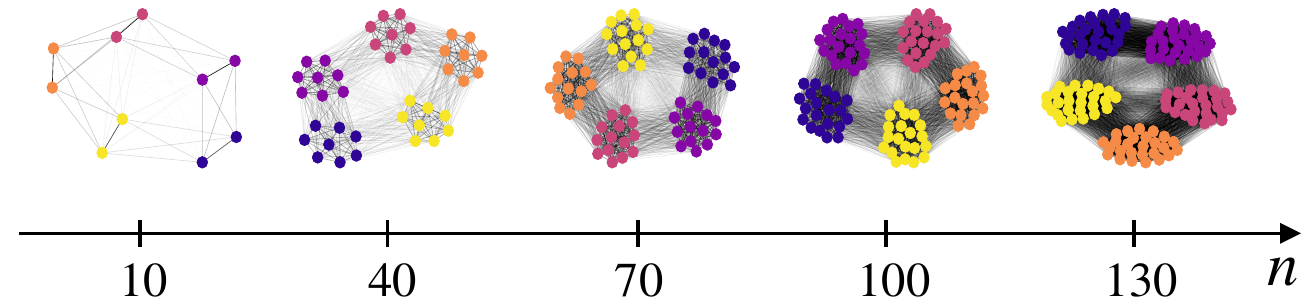}}
\caption{\rf{Predicted graphs with the estimated model $f_\theta(x)$ with a varying number of nodes $n$ for $x=5$.}}
\label{fig:nout}
\end{center}
\vskip -0.2in
\end{figure}

\paragraph{Experimental setting. } We test the nonparametric version of the proposed method, using a probability product kernel on the mass spectra, as it has been shown to be a good choice on this problem \citep{brouard2016fast}. We use $\beta = 0.5$ as FGW balancing parameter. We split the dataset into a training set of size $N=3000$ and a test set of size $N_{te}= 1138$. 
 On
this problem, structured prediction approaches that have been proposed fall back
on the availability of a known candidate set of output graphs for each input
spectrum \citep{brouard2016fast}. \rf{This means that in practice for prediction
on new data, we will not
solve the FGW barycenter in \eqref{eq:fgw_estimator} but search among the possible
candidates in $\bmY$ the one minimizing the barycenter loss.}\\
In a first experiment, we evaluate the performance of FGW as a graph metric.
To this end we compare the performance of various graph metrics $D: \bmY \times \bmY \rightarrow \R^+$ used in the model: $\argmin_{y \in \bmY} \sum_{j=1}^N \alpha_j(x; W) D(y, y_j)$. We consider the metric induced by the standard Weisfeiler–Lehman (WL) graph kernel that consists in embedding graphs as a bag of neighbourhood configurations \citep{shervashidze2011weisfeiler}. The FGW one-hot distance corresponds to the FGW distance and using a one-hot encoding of the atoms. The FGW fine distance corresponds to the one-hot distance concatenated with additional atom features: number of attached hydrogens, number of heavy neighbours, formal charge, is in a ring, is in an aromatic ring. Additional features are normalize by their maximum values in the molecule at hand. The FGW diffuse distance corresponds to the FGW distance and using a one-hot encoding of the atoms which has been diffused, namely: $F_{\text{diff}} = e^{- \tau \text{Lap}(C)}F$, where $\tau>0$, $\text{Lap}(C)$ denotes the normalized Laplacian of $C$ as proposed in \citet{barbe2020graph}. Fingerprints are molecule representations, well engineered by experts, that are binary vectors. Each value of the fingerprint indicates the presence or absence of a certain molecular property (generally a molecular substructure). Several machine learning approaches  using fingerprints as output representations have obtained very good performances for metabolite identification  \citep{duhrkop2015searching, brouard2016fast,nguyen2018} or other tasks, such as metabolite structural annotation \citep{hoffmann2021high}. In the last two Casmi challenges \citep{schymanski2017critical}, such approaches have obtained the best performances for the best automatic structural identification category.
Here we consider the metrics induced by linear and Gaussian kernels between fingerprints of length $d = 2765$. \cam{Notice that, in this case, the structured prediction method corresponds to IOKR-Ridge proposed in \citet{brouard2016input}}.
For the FGW metrics, we compute them using the 5 greatest weights $\alpha_i(x)$. We evaluate the results in terms of Top-k accuracy: percentage of true output among the k outputs given by the k greatest scores in the model. The two hyperparameters (ridge regularization parameter $\lambda$ and the output metric's parameter) are selected using a validation set (1/5 of the training set) and Top-1 accuracy.
%

%
\cam{\begin{table}[t!]
\begin{center}
\begin{small}
\begin{sc}
\begin{tabular}{lccc}
\toprule
 & Top-1 & Top-10 & Top-20\\
\midrule
WL kernel & 9.8\% & 29.1\% & 37.4\% \\
Linear fingerprint & 28.6\% & 54.5\% & 59.9\%\\
Gaussian fingeprint & 41.0\% & 62.0\% & 67.8\% \\\hline
FGW one-hot & 12.7\% & 37.3\% & 44.2\% \\
FGW fine & 18.1\% & 46.3\% & 53.7\%\\
FGW diffuse & 27.8\% & 52.8\% & 59.6\%\\
\bottomrule
\end{tabular}
\end{sc}
\end{small}
\end{center}
\caption{Top-k accuracies for various graph \cam{metrics} on the metabolite identification dataset.}
\label{tab:metric_comparison}
\end{table}}
\paragraph{Graph metrics comparison.} The results given in Table \ref{tab:metric_comparison} shows that Gaussian fingerprints is the best performing metric on this dataset when a candidate set is available.
\cam{We see that the FGW greatly benefits from the improved fine and diffuse metrics showing the adaptation potential of the FGW metric to the graph space at hand reaching competitive performance against fingerprints with linear kernel and beating WL kernels. The method proposed in this work is the first generic approach that obtained good Top-k accuracies without using expert-derived molecular graph representations.}

\paragraph{Predicting novel molecules.} Being able to interpolate novel graphs without using predefined \cam{finite} candidate sets is a great advantage of the proposed method. Such computation is in general intractable (e.g. with WL and fingerprint metrics). In this experiment, we evaluate the performance of the estimator when computing the barycenter over $\bmZ_n$, and not over the candidate sets. For a given test input $x$, let us define $d_0(x)$ the FGW (one-hot) distance of the training molecule with the greatest $\alpha_j(x)$ to the true molecule. $d_0(x)$ measures the level of interpolation difficulty: very small $d_0$ means that the true molecule is close to a training molecule and no interpolation is required. We compute, over $1000$ test data, the mean $d_0(x)$ and the mean FGW (one-hot) distance between the predicted barycenter (using the $10$ largest $\alpha_j(x)$) and the true test molecule. In Figure \ref{fig:novel}, we plot the two mean distances, with respect to a filtering threshold $d_{min}$ such that only the test point with $d_0(x) > d_{min}$ are used when computing these means. We can see that the FGW interpolation allows to become closer to the true output than only predicting the output with the greatest weight $\alpha_j(x)$, even more when interpolation is required ($d_0(x)$ big). This validates the choice of FGW as a way to interpolate between real-world graphs.
\begin{figure}[t!]
\begin{center}
\centerline{\includegraphics[width=0.6\columnwidth]{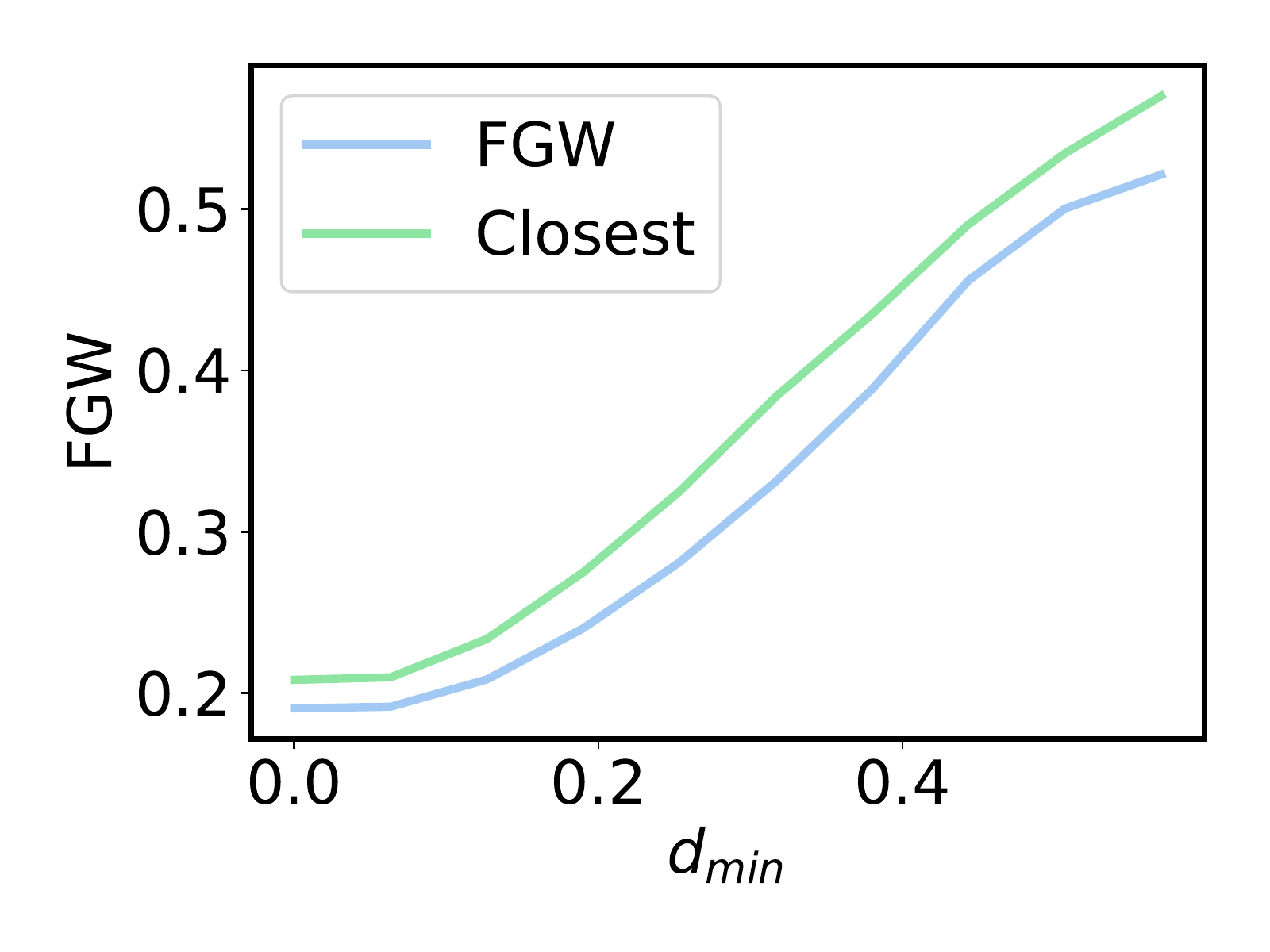}}\vspace{-4mm}
\caption{No candidate set setting. In average, the FGW barycenter (blue) using the 10 molecules with the greatest weights $\alpha_j(x)$ is closer to the true molecule, than the molecule with the greatest weight $\alpha_j(x)$: closest template prediction (green).}
\label{fig:novel}
\end{center}
\vskip -0.3in
\end{figure}
\cam{
\paragraph{Comparison with a flow-based deep graph generation method.} As mentioned previously, to the best of our knowledge, there is no generic method for graph prediction able to deal with any graph space at hand. The only existing methods, that do not require expert-derived graph representations available for a specific graph space, are unsupervised deep graph generation methods \citep{li2018multi, liao2019efficient, zang2020moflow, mercado2021graph}. We propose to compare our approach by designing a new generic graph prediction method. We use the deep generative graph representations from MoFlow \citep{zang2020moflow} learned from 249.455 molecules and which obtained state-of-the-art results in (unsupervised) molecular graph generation. The latent representations are learned via kernel ridge regression, then we predict the candidate with the closest latent representation to the estimated one. Note that because the pre-trained model's architecture can not handle all atoms present in the metabolite dataset, we removed from the dataset the molecules with not handled atoms. Moreover, we compute the test predictions using the test spectra with less than 300 candidates for faster computation: 286 test points. The results are given in Table \ref{tab:moflow_metric_comparison}. We observe that FGW diffuse exhibits far better performance than the MoFlow approach.
\begin{table}[ht!]
\begin{center}
\begin{small}
\begin{sc}
\begin{tabular}{lccc}
\toprule
 & Top-1 & Top-10 & Top-20\\
\midrule
Gaussian fingerprint & 46.2\% & 77.8\% & 84.9\%\\\midrule
FGW diffuse & 40.3\% & 69.7\% & 78.3\%\\
MoFlow representat. & 20.0\% & 58.2\% & 68.4\%\\
\bottomrule
\end{tabular}\vspace{-2mm}
\end{sc}
\end{small}
\end{center}
\caption{Top-k accuracies obtained using deep molecular graph representations in comparison to the proposed FGW metric, and expert-derived fingerprint representations.}\vspace{-6mm}
\label{tab:moflow_metric_comparison}
\end{table}\\}

\section{Conclusion}

\rf{We proposed in this work a novel framework for graph prediction using optimal transport barycenters to interpolate continuously in the output space. We discussed both a non-parametric estimator with theoretical guarantees and a parametric one based on neural network models that can be estimated with stochastic gradient methods. The method was illustrated on synthetic and real life data showing the interest of the continuous relaxation especially when targets are not available.\\
Future works include estimation of the target number of nodes $n(x)$ and supervised learning of complementary feature on the templates that can guide the FGW barycenters. }

\section*{Acknowledgements}
The first and last authors are funded by  the French National Research Agency (ANR) through ANR-18-CE23-0014 APi (Apprivoiser la Pré-image) and the Télécom Paris Research Chair DSAIDIS. This work was also partially funded through the projects OATMIL ANR-17-CE23-0012, 3IA Côte d’Azur Investments ANR-19-P3IA-0002 of the French National Research Agency (ANR) and was produced within the framework of Energy4Climate Interdisciplinary Center (E4C) of IP Paris and Ecole des Ponts ParisTech. It was supported by 3rd Programme d’Investissements d’Avenir ANR-18-EUR-0006-02. This action benefited from the support of the Chair "Challenging Technology for Responsible Energy" led by l’X – Ecole polytechnique and the Fondation de l’Ecole polytechnique, sponsored by TOTAL. This research was partially funded by Academy of Finland grant 334790 (MAGITICS).

\bibliography{references}
\bibliographystyle{icml2022}

\newpage
\appendix
\onecolumn

\section{Theory}

\subsection{Proof of FGW continuity}

We prove the continuity of $\FGW(., y) : \bmZ_n \rightarrow \R$ for any $y \in \bmY$. Such result is crucial to prove the ILE property of $\FGW: \bmZ_n \times \bmY \rightarrow \R$.

\begin{restatable}[FGW continuity]{lemma}{lem1}\label{lem:continuity} Let $y = (C_2, F_2)$ with $C_2 \in \R^{n_2 \times n_2}, F_2\in \R^{n_2 \times d}$, $n_2, d \in \mathbb{N}^*$. The map $\FGW(., y) : \bmZ_n \rightarrow \R$ is continuous.
\end{restatable}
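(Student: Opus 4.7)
My plan is to show continuity by viewing the $\FGW^2$ value as the minimum of a jointly continuous function over a compact constraint set that does not depend on the argument $z$, and then invoke a standard argument (a special case of Berge's Maximum Theorem).

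First, I would fix $y = (C_2, F_2, n_2^{-1}\un_{n_2}) \in \bmY$ and introduce the joint objective
\[
L\big((C, F), \pi\big) := \sum_{i,k=1}^{n}\sum_{j,l=1}^{n_2} \Big[(1-\beta)\|F(i) - F_2(j)\|_{\R^d}^2 + \beta\big(C(i,k) - C_2(j,l)\big)^2\Big]\pi_{i,j}\pi_{k,l},
\]
defined on $\bmZ_n \times \plans_{n, n_2}$. Then $\FGW_2^2(z, y) = \min_{\pi \in \plans_{n, n_2}} L((C,F), \pi)$, where $z = (C, F, n^{-1}\un_n)$.

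Next, I would observe the two facts that make the Berge-type argument trivial here: (i) $L$ is a polynomial in $(C, F, \pi)$, hence continuous on $\bmZ_n \times \plans_{n,n_2}$; (ii) because every element of $\bmZ_n$ carries the \emph{same} uniform mass $n^{-1}\un_n$, the feasible set $\plans_{n, n_2}$ is a fixed compact convex polytope in $\R^{n \times n_2}$, independent of $z$. This constancy of the constraint correspondence is the key point that simplifies the argument.

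From (i)--(ii) continuity of the value function follows directly: given a convergent sequence $z_k \to z$ in $\bmZ_n$, uniform continuity of $L$ on the compact set $\bmZ_n \times \plans_{n,n_2}$ yields $\sup_{\pi}|L(z_k,\pi) - L(z,\pi)| \to 0$, and taking a minimum over $\pi$ preserves this, so $\FGW_2^2(z_k, y) \to \FGW_2^2(z, y)$. Finally, since $\FGW_2(\cdot, y) = \sqrt{\FGW_2^2(\cdot, y)}$ and $\sqrt{\cdot}$ is continuous on $[0, +\infty)$, the continuity of $\FGW_2(\cdot, y)$ on $\bmZ_n$ follows.

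There is no real obstacle here: the only subtlety worth flagging is the reliance on the fact that all graphs in $\bmZ_n$ share the same uniform marginal $n^{-1}\un_n$, which is what keeps the constraint polytope fixed. If one allowed varying weights $h$, one would need a continuity-of-correspondences argument; but under the paper's definition of $\bmZ_n$ this is avoided, and the proof reduces to uniform continuity of a polynomial on a compact set.
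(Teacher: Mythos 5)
Your proof is correct, and it reduces to the same pivotal observation as the paper's but establishes it by a cleaner, more abstract route. Both arguments ultimately rest on the elementary inequality $|\min_\pi f(\pi) - \min_\pi g(\pi)| \le \sup_\pi |f(\pi) - g(\pi)|$, so in both cases the problem is reduced to showing that $\sup_\pi |L(z',\pi) - L(z,\pi)| \to 0$ as $z' \to z$. Where you differ is in how you establish that uniform-in-$\pi$ bound: the paper performs an explicit first-order expansion of the quadratic integrand at $z$, bounding the increment via Cauchy--Schwarz and the crude estimate $\pi_{ij} \le 1$, which yields a quantitative (locally Lipschitz-type, modulo $o(\cdot)$ terms) modulus of continuity. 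You instead observe that $L$ is a polynomial and that $\bmZ_n \times \plans_{n,n_2}$ is compact (using that $\bmF$ is finite so $\mathrm{Conv}(\bmF)^n$ is compact, and that $[0,1]^{n\times n}$ is compact), so $L$ is uniformly continuous there and the supremum convergence is automatic. Your version is shorter and less error-prone since it avoids tracking constants and asymptotic notation; the paper's explicit computation has the side benefit of exhibiting a local Lipschitz bound, which is more information than mere continuity but is not needed for the ILE application. You also correctly flag the structural fact that makes the Berge-style argument degenerate here: all elements of $\bmZ_n$ share the marginal $n^{-1}\un_n$, so the feasible polytope $\plans_{n,n_2}$ is fixed and no continuity-of-correspondences machinery is required --- a point the paper uses implicitly but does not call out.
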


\begin{proof} Recall that for any $z=(C, F) \in \bmZ_n$:
\begin{align}
    \FGW^2_2(z, y) = \min\limits_{\pi \in \plans_{n1,n2}} \sum_{i,k,j,l} \big[(1-\beta)\|F(i) - F_2(j)\|_{\R^d}^2 + \beta (C(i,k) - C_2(j,l))^2\big] \pi_{i,j}\pi_{k,l}.
\end{align}
Using the inequality $|\min_\pi f(\pi) - \min_\pi g(\pi)| \leq \sup_\pi |f(\pi) - g(\pi)|$ for any $f, g: \plans_{n1,n2} \rightarrow \R$, we have for any $dz = (dC, dF) \in \bmZ_n$

\begin{align}
|\FGW^2_2(z + dz, y) - \FGW^2_2(z, y)| 
\leq& \sup_{\pi \in \plans_{n1,n2}} |\sum_{i,k,j,l} \big[(1-\beta)\left(\langle dF(i)|F_2(j)\rangle_{\R^d} + o(\|dF(i)\|\rangle_{\R^d})\right)\\ 
& + \beta \left(dC(i, k) C_2(j,l) + o(dC(i,k))\right)\big] \pi_{i,j}\pi_{k,l}|\nonumber\\
&\leq nn_2 \big[(1-\beta)\left(\|dF\|_{\R^{n \times d}} \|F_2\|_{\R^{n \times d}} + o(\|dF\|_{\R^{n \times d}})\right)\\
&+ \beta \left(\|dC\|_{\R^{n \times n}}\|dC_2\|_{\R^{n_2 \times n_2}} + o(\|dC\|_{\R^{n \times n}})\right)\big]\nonumber\\
&= \mathcal{O}(\|dz\|_{\R^{n \times n} \times \R^{n \times d}}) \xrightarrow[dz \rightarrow 0]{} 0
\end{align}
where from (13) to (14) we have used the Cauchy–Schwarz inequality, and the fact that $\forall (i,j) \in \llbracket 1, n\rrbracket \times \llbracket 1, n_2\rrbracket, \pi_{ij} \leq 1$.

We conclude that $z \rightarrow \FGW^2_2(z, y)$ is a continuous on $\R^{n \times n} \times \R^{n \times d}$, hence on $\bmZ_n$.

\end{proof}

\subsection{Universal consistency theorem}

We restate the universal consistency theorem from \citet{ciliberto2020general} that is verified by our estimator because of the proved ILE property.

\begin{restatable}[Universal Consistency]{theorem}{th1}\label{th:consistency} Let $k$ be a bounded universal reproducing kernel. For any $N \in \mathbb{N}$ and any distribution $\rho$ on $\bmX \times \bmY$ let $f_W$ be the proposed estimator built from $N$ independent couples $(x_i, y_i)_{i=1}^N$ drawn from $\rho$. Then, if $\lambda = N^{-1/2}$,
\begin{equation}
    \lim\limits_{N \rightarrow + \infty} \risk_{\Delta}^n(f_W) = \risk_{\Delta}^n(f^*) \quad \text{ with probability } \quad 1.
\end{equation}
\end{restatable}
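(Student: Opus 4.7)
The plan is to deduce the statement as a direct consequence of the general universal consistency result for structured-prediction estimators in the Implicit Loss Embedding (ILE) framework of \citet{ciliberto2020general}, by verifying that all of its hypotheses are met for $\Delta_{\FGW}$ and the proposed $f_W$. In particular, since the theorem statement is explicitly labeled a restatement, the proof essentially amounts to a careful reduction to that external result.

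First, I would recall that $\Delta_{\FGW}:\bmZ_n\times\bmY\to\R$ admits an ILE: this was already established in the earlier proposition using the compactness of $\bmZ_n$, the finiteness of $\bmY$, and the continuity of $z\mapsto\Delta_{\FGW}(z,y)$ proved in Lemma~\ref{lem:continuity}. Consequently there exist a separable Hilbert space $\bmU$ and bounded measurable maps $\psi:\bmZ_n\to\bmU$, $\phi:\bmY\to\bmU$ such that $\Delta_{\FGW}(z,y)=\langle\psi(z),\phi(y)\rangle_\bmU$ for all $(z,y)\in\bmZ_n\times\bmY$.

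Second, I would identify $f_W$ with the surrogate-plus-decoding estimator produced by the ILE machinery. Letting $\hat h$ be the kernel ridge regression estimator in the vector-valued RKHS $\bmH_\bmK$ associated to the decomposable operator-valued kernel $\bmK(x,x')=I_\bmU k(x,x')$ with regularization $\lambda=N^{-1/2}$, the representer theorem gives
\begin{equation*}
\hat h(x)=\sum_{i=1}^N\alpha_i(x)\phi(y_i),\qquad \alpha(x)=(K+\lambda I_N)^{-1}k_x,
\end{equation*}
and the decoded predictor $x\mapsto\argmin_{z\in\bmZ_n}\langle\psi(z),\hat h(x)\rangle_\bmU$ coincides, by linearity and the ILE identity, with Eq.~\eqref{eq:nonpar_fgw_estimator}. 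Thus $f_W$ is exactly the ILE-based estimator covered by the general framework, and no ad hoc analysis is required at this stage.

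Finally, I would invoke the universal consistency theorem from \citet{ciliberto2020general} (restated here as Theorem~\ref{th:consistency}): since $k$ is bounded and universal, $\phi$ and $\psi$ are bounded (from the ILE definition together with finiteness of $\bmY$ and compactness of $\bmZ_n$), and $\lambda=N^{-1/2}$ satisfies the required rate condition ($\lambda\to 0$ with $\lambda\sqrt{N}\to\infty$), that result yields
\begin{equation*}
\risk_{\Delta}^n(f_W)\ \xrightarrow[N\to\infty]{\mathrm{a.s.}}\ \risk_{\Delta}^n(f^*)
\end{equation*}
for any Borel distribution $\rho$ on $\bmX\times\bmY$, with no further assumption on $\rho$, which is the desired claim. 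The main obstacle is essentially structural rather than analytic: one must verify that the proposed nonparametric barycentric estimator truly instantiates the surrogate-plus-decoding scheme of the ILE framework (rather than merely resembling it), so that the comparison inequality converting $L^2_\rho$-convergence of $\hat h\to h^*$ into target-risk convergence of $f_W\to f^*$ applies verbatim. Once that identification is made and the ILE property is in hand, invoking Theorem~\ref{th:consistency} as a black box closes the argument.
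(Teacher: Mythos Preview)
Your proposal is correct and matches the paper's own treatment: the paper does not give an independent proof but simply restates the universal consistency theorem from \citet{ciliberto2020general}, noting it applies because $\Delta_{\FGW}$ admits an ILE. Your write-up is more detailed (explicitly identifying $f_W$ with the surrogate-plus-decoding estimator and checking the kernel and boundedness hypotheses), but the underlying approach is identical; one minor wording issue is that in your last sentence you refer to ``Theorem~\ref{th:consistency}'' as the black box, when you mean the external result of \citet{ciliberto2020general} that Theorem~\ref{th:consistency} restates.
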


\subsection{Attainability assumption}

The following assumption is required to obtain finite sample bounds. It is a standard assumption in learning theory \citep{caponnetto2007optimal}. It corresponds to assume that the solution $h^*$ of the surrogate problem indeed belongs to the considered hypothesis space, namely the reproducing kernel Hilbert space induced by the chosen operator-valued kernel $\bmK(x,x') = k(x,x') I_\bmU$.

\begin{assumption}[attainable case]\label{as:2} We assume that there exists a linear operator $H: \bmH_x \rightarrow \bmU$ with $\|H\|_{\hs} < +\infty$ such that
\begin{equation}
    \E_{Y | x}[\phi(Y)] = H k(x,.)
\end{equation}
with $\bmH_x$ the reproducing kernel Hilbert space associated to the kernel $k(x,x')$.
\end{assumption}

\section{Neural network model and training algorithm}

\paragraph{Choice of the templates.} As always in deep learning, parameter
initialization is an important aspect and we discuss now how to initialize the
templates $\bar z_j$. In practice they can be initialized at random with
matrices $\bar C_j$ drawn uniformly in $[0,1]$ or chosen at random from training
samples as suggested by the non-parametric model. One interesting aspect is that
the number of nodes do not need to be the same for all templates. This means
that one can have both templates with few nodes and templates with a larger
number of nodes allowing for a coarse to-fine modeling of the graphs. 

\paragraph{Pseudocode.} We give the pseudocode for the proposed neural network training algorithm. This algorithm has been implemented in Python using the POT library: Python Optimal Transport \citep{flamary2021pot}, and Pytorch library \citep{paszke2019pytorch}.

\begin{algorithm}\label{algo:train}
\begin{algorithmic}
   \STATE {\textbf{Input:}} $x \rightarrow \alpha(x)$ neural network's parameters $W$. Templates $(\bar z_j)_{j=1}^M$. Dictionary learning (True or False).
   \STATE 1. If Dictionary learning is True: $\theta = (W, (\bar z_j)_{j=1}^M)$. Otherwise: $\theta = W$.
   \STATE 2. $(\bar \pi_j)_{j=1}^M \leftarrow$ Compute the barycenter $f_{\theta}(x_i)$.
   \STATE 3. $ \pi_i \leftarrow$ Compute the losses $\FGW(f_\theta(x_i),y_i))$.
   \STATE 4. $ \nabla_\theta \leftarrow$ Compute the gradient of $\FGW(f_\theta(x_i),y_i))$  with fixed OT plans $(\bar \pi_j)_j$ and $\pi_i$.
  \STATE {\textbf{Return:}} Updated neural network's parameters $W$, updated templates $(\bar z_j)_{j=1}^M$.
\end{algorithmic}
\caption{Neural network-based model training - One stochastic gradient descent step}
\end{algorithm}

\paragraph{Python implementation on github.} The code is available on github at \url{https://github.com/lmotte/graph-prediction-with-fused-gromov-wasserstein}.









        
        



















        


\section{Justification of the algorithms}
Reminder on ILE and surrogate problem:\\
Recall that $\hat h$ is solving a least-squares problem, that is estimate $h^*(x) = \E_{z|x}[\phi(z)]$. Moreover, we can write $f^*(x) = \argmin_{\hat z} \E_{z|x}[\Delta(\hat z,z)]$. Now, we can provide intuition in the following derivations about the construction of $\hat f$ exploiting the linearity of expectation.
\begin{align*}
    \hat f(x) &= \argmin_{\hat z} \langle \psi(\hat z),\, \hat h(x)\rangle_{\bmH}\\
    &\approx \argmin_{\hat z} \langle \psi(\hat z),\, h^*(x)\rangle_{\bmH}.
\end{align*}
Moreover, we have: 
\begin{align*}
\langle \psi(\hat z),\, h^*(x)\rangle_{\bmH}  &= \E_{z|x}[\langle \psi(\hat z),\, \phi(z)\rangle_{\bmH}]\\
    &=  \E_{z|x}[\Delta(\hat z ,z)] \\
     \end{align*}
    and thus, taking the "arg min" gives:
\begin{align*}
         \hat f(x) \approx f^*(x).
\end{align*}

\section{Discussion about keeping only the greatest weights $\alpha_i(x)$ in the barycenter computation}

In the metabolite identification experiments we computed the barycenter only using the 5 greatest ones. In the following experiments, we show that, beyond the considerable computational interest, this approximation is also statistically beneficial on this dataset. We compute the test Top-k accuracies by changing the number of kept $\alpha_i$. From Figure \ref{fig:kept_alpha}, it seems that the best number of kept $\alpha_i(x)$ seems to be around $10$.

\begin{figure}[ht!]
\vskip 0.2in
\begin{center}
\centerline{\includegraphics[width=\columnwidth/2]{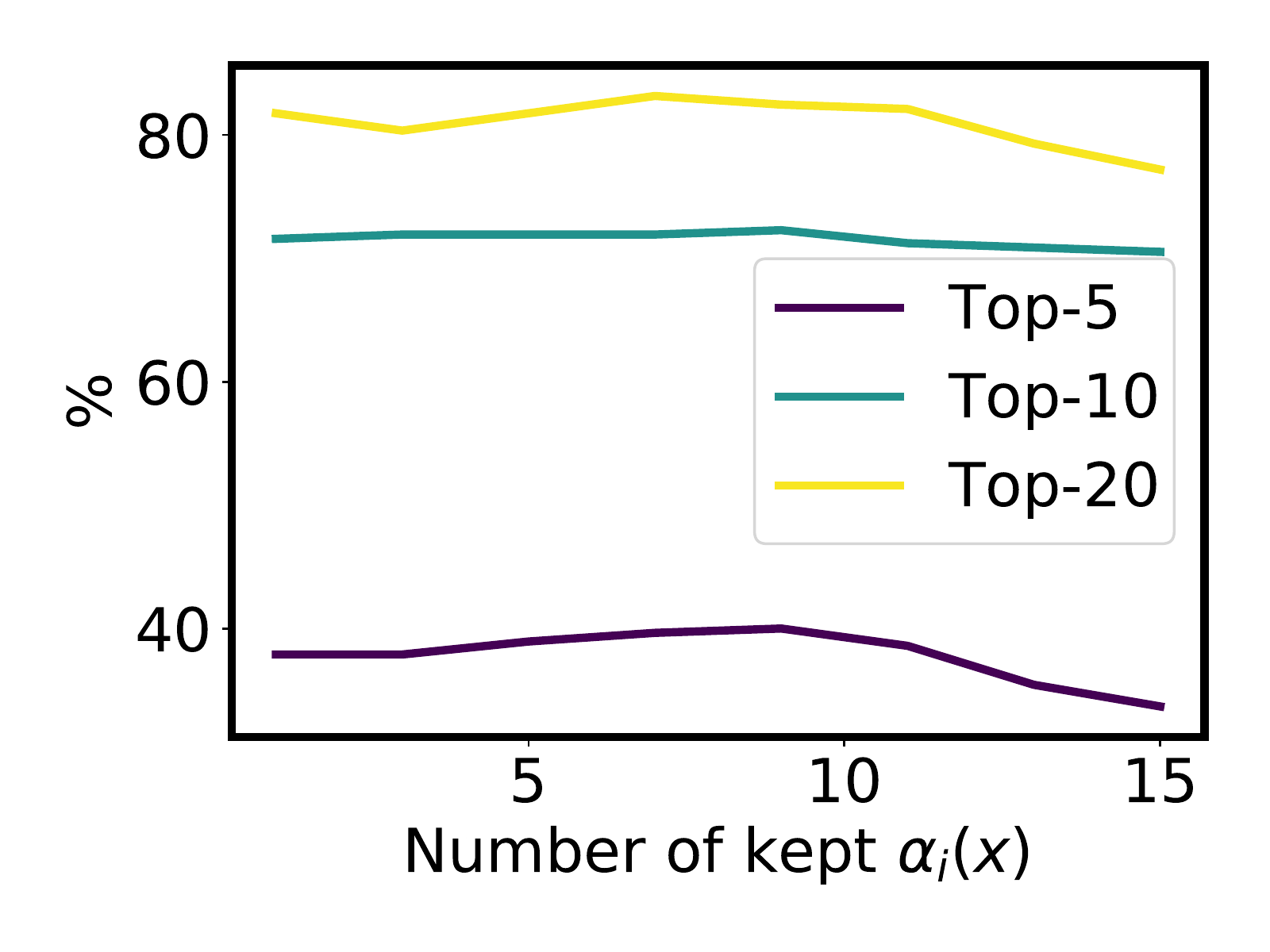}}
\caption{Top-k accuracies of $f_\theta(x)$ using a varying number of kept $\alpha_i(x)$.}
\label{fig:kept_alpha}
\end{center}
\vskip -0.2in
\end{figure}


\end{document}